\newtheorem{theorem}{Theorem}
\newtheorem{lemma}[theorem]{Lemma}
\theoremstyle{definition}
\renewcommand{\subset}{\subseteq}
\renewcommand{\hat}{\widehat}
\renewcommand{\epsilon}{\varepsilon}
\def\<{\langle}
\def\>{\rangle}
\def\({\Big(}
\def\){\Big)}
\def\M{\mathcal{M}}
\def\calQ{\mathcal{Q}}
\def\Pavg{P_{\text{avg}}}
\def\Pmax{P_{\text{max}}}
\def\R{\mathbb{R}}
\begin{document}
\title{Maximal function pooling with applications}
\author{Wojciech Czaja\thanks{Norbert Wiener Center, Department of Mathematics, University of Maryland College Park. E-mail: wojtek@math.umd.edu} \and Weilin Li\thanks{Courant Institute of Mathematical Sciences, New York University. E-mail: weilinli@cims.nyu.edu} \and Yiran Li\thanks{Norbert Wiener Center, Department of Mathematics, University of Maryland College Park. E-mail: yiranli2019@gmail.com} \and Mike Pekala\thanks{Norbert Wiener Center, Department of Mathematics, University of Maryland College Park. E-mail: mpekala@umd.edu}}

\maketitle

\begin{abstract}
	Inspired by the Hardy–Littlewood maximal function, we propose a novel pooling strategy which is called maxfun pooling. It is presented both as a viable alternative to some of the most popular pooling functions, such as max pooling and average pooling, and as a way of interpolating between these two algorithms. We demonstrate the features of maxfun pooling with two applications: first in the context of convolutional sparse coding, and then for image classification.
\end{abstract}

\epigraph{This paper is dedicated to our friend, Professor John Benedetto, on the occasion of his 80th birthday.}{}

\section{Introduction} \label{sec:introduction}
In the last decade, the rapid developments in machine learning and artificial intelligence have captured the imagination of many scientists, across a full spectrum of disciplines. Mathematics has not been immune to this phenomenon. In fact, quite the opposite has happened and many mathematicians have been at the forefront of this fundamental research effort. Their contributions ranged from statistics, to optimization, to approximation theory, and last but not least, to harmonic analysis and related representation theory.
It is this last aspect that we want to focus on in this paper, as it has lead to many intriguing developments associated with the general theory of deep learning, and more specifically, with convolutional neural networks, see \cite{mallat2016}. 

Convolutional neural nets (CNNs) are a popular type of architecture in deep learning, which has shown an outstanding performance in various applications, e.g., in image and video recognition, in image classification \cite{ciregan2012multi}, \cite{ciresan2011flexible}, or in natural speech processing \cite{collobert}. CNNs can be effectively modeled by multiscale contractions with wavelet-like operations, applied interchangeably with pointwise nonlinearities \cite{mallat2016}. This results in a wealth of network parameters, which can negatively impact the numerical performance of the network. Thus, a form of dimensionality reduction or data compression is needed in order to efficiently process the information through the artificial neural network.
For these purposes many examples of CNNs use \emph{pooling} as a type of layer in their networks. Pooling is a dimension reduction technique that divides the input data into subregions and returns only one value as the representative of each subregion. Many examples of such compression strategies have been proposed to-date. Max pooling and average pooling are the two most widely used traditional pooling strategies and they have demonstrated good performance in application tasks \cite{goodfellow2016deep}. In addition to controlling the computational cost associated with using the network, pooling also helps to reduce overfitting of the training data \cite{graham}, which is a common problem in many applications.

In addition to the classical examples of maximal and average pooling, many more pooling methods have been proposed and were implemented in neural net architectures. Among those constructions, a significant role has been played by ideas from harmonic analysis, due to their role in providing effective models for data compression and dimension reduction. 
Spectral pooling was proposed in \cite{spectral} to perform the reduction step by truncating the representation in the frequency domain, rather than in the original coordinates. This approach claims to preserve more information per parameter than other pooling strategies and to increase flexibility for the size of pooling output.
Hartley pooling was introduced in \cite{Hartley} to address the loss of information that happens in the dimensionality reduction process. Inspired by the Fourier spectral pooling, the Hartley transform was proposed as the base, thus avoiding the use of complex arithmetic for frequency representations, while increasing the computational effectiveness and network's discriminability.
Transformation invariant pooling based on the discrete Fourier transform was introduced in \cite{tip} to achieve translation invariance and shape preservation thanks to the properties of the Fourier transform.
Wavelet pooling \cite{waveletp} is another alternative to the traditional pooling procedures. This method decomposes features into a two level decomposition and discards the first-level subbands to reduce the dimension, thus addressing the overfitting, while reducing features in a structurally conscious manner.
Multiple wavelet pooling \cite{mwaveletp} builds upon the wavelet pooling idea, while introducing more sophisticated wavelet transforms such as Coiflets and Daubechies wavelets, into the process.
An even more general approach was proposed in \cite{elpp}, where $\ell_p$ pooling was defined based on the concept of a representation in terms of general frames for $\R^d$, to provide invariance to the system.

In this paper we follow in the footsteps of the aforementioned constructions and propose a novel method for reducing the dimension in CNNs, which is based on a fundamental concept from harmonic analysis. Inspired by the \textit{Hardy--Littlewood maximal function}, \cite {HL}, cf., \cite{coifman1974weighted}, \cite{muckenhoupt1972weighted} for its modern treatment, we introduce a novel pooling strategy, called \textit{maxfun pooling}, which can be viewed as a natural alternative to both max pooling and average pooing. In particular, max pooling takes the maximum value in each pooling region as the scalar output, and average pooling takes the average of all entries in each pooling region as the scalar output. As such, maxfun pooling can be interpreted as a novel and creative way of interpolating between max and average pooling algorithms.

In what follows, we introduce a discrete and computationally feasible analogue of the Hardy-Littlewood maximal function. The resulting operator depends on two integer parameters $b$ and $s$, corresponding to the size of the pooling region and the stride. We limit the support of this operator to be finite, we discretize it and define the maximal function pooling operation, denoted throughout this paper by \textit{maxfun pooling}. The maxfun pooling computes averages of subregions of different sizes in each pooling region, and it selects the largest average among all. To demonstrate the features of maxfun pooling, we present two different applications. First, we study its properties in the realm of convolutional sparse coding. It has been shown that feedforward convolutional neural networks can be viewed as convolutional sparse coding \cite{papyan2016convolutional}. Moreover, under the point of view presented by the convolutional sparse coding, stable recovery of the signal contaminated with noise can be achieved, given simple sparsity conditions \cite{papyan2016convolutional}. Equivalently, it implies that feedforward neural networks maintain stability under noisy conditions. The case of pooling function analyzed via convolutional sparse coding is studied in \cite{kabkab2017spatial}, where the two common pooling functions, max pooling and average pooling are analyzed. We follow the framework presented in \cite{kabkab2017spatial} and we show that stability of the neural network under the presence of noise is also preserved with maxfun pooling. We close this paper with a different application, presenting illustrative numerical experiments utilizing maxfun pooling for image classification.

\section{Preliminaries}  \label{sec:prelim}

In this section, we elaborate upon the role of pooling in neural networks, and discuss two traditional strategies, max and average pooling. We focus on image data as our main application domain, but we mention that the subsequent definitions can be readily generalized. 

We view images as functions on a finite lattice $X\colon [M]\times [N]\to\R$, where $[K]:=\{0,1,\dots,K-1\}$, or $X\in\R^{M\times N}$ in short. Its $(i,j)-th$ coordinate is denoted $X_{i,j}$. In practice, it is convenient to fold images into vectors. Slightly abusing notation, for $X\in\R^{M\times N}$, we also let $X\in\R^{MN}$ denote its corresponding vectorization,
\begin{equation}
\label{eq:vectorization}
X
=(X_{1,1}, X_{2,1},\dots, X_{M,1}, X_{1,2},\dots, X_{M,2}, \dots, X_{M,N})^T.
\end{equation}
Throughout this chapter, we will not make distinctions between vectors and images.
As a consequence, we shall assume without any loss of generality that pooling layer's input is always nonnegative.

\subsection{Max and average pooling}

Both the maximum and average pooling operators depend on a collection of sets $\calQ\subset [M]\times[N]$, which we refer to {\it pooling regions}. There is a standard choice of $\calQ$, which we describe below. Fix an odd integer $s$ and set $m=\lfloor M/s\rfloor$ and $n=\lfloor N/s \rfloor$. For each pair of integers $(k,\ell)$, we define the square $Q_{k,\ell}$ of size $s\times s$ by
\begin{equation}
\label{eq:Q}
Q_{k,\ell }
:= \{(i,j)\colon (k-1)s \leq i < ks, \, (\ell -1)s \leq j < \ell s\}. 
\end{equation}
It is common to refer to $s$ as the {\it stride}. The stride determines the size of each pooling region $Q_{k,\ell}$ and the total number of squares in $\calQ$. The use of a stride implicitly reduces the input data dimension since an image of size $M\times N$ is then reduced to one of size $m\times n$.

For this collection $\calQ$, the {\it average pooling} operator $\Pavg\colon\R^{M\times N}\to\R^{m\times n}$ is given by
\[
(\Pavg X)_{k,\ell}:=\frac{1}{s^2} \sum_{(i,j)\in Q_{k,\ell}} X_{i,j}.
\]
For the same collection, the {\it max pooling} operator $\Pmax\colon \R^{M\times N}\to\R^{m\times n}$ is defined as
\[
(\Pmax X)_{k,\ell}:=\max_{(i,j)\in Q_{k,\ell}} X_{i,j}.
\]
In other words, $\Pavg$ simply averages the input image on each $Q_{k,\ell}$, while $\Pmax$ is the supremum of the values of the image restricted to $Q_{k,\ell}$.

\subsection{Maximal function}

For a locally integrable function $f \in L^1_\text{\rm loc}(\R^d)$, we can define its {\it Hardy--Littlewood maximal function} $M(f)$ as
\begin{equation}
\label{e.8.1}
\forall\; x \in \R^d, \quad M(f)(x) = \sup_{B(x)} \frac{1}{m^d(B(x))} \int_{B(x)} |f|\; dm^d,
\end{equation}
where the supremum is taken over all open balls $B(x)$ centered at $x$ and the integral is taken with respect to the Lebesgue measure $m^d$. Because of this last property, sometimes we talk about the {\it centered} maximal function, as opposed to the non-centered analogue. The function $M$ was introduced by Godfrey H. Hardy and John E. Littlewood in 1930 \cite{HL}. Maximal functions had a profound influence on the development of classical harmonic analysis in the 20th century, see, e.g., \cite{Grafakos} and \cite{Stein}. Among other things, they play a fundamental role in our understanding of the differentiability properties of functions, in the evaluation of singular integrals, and in applications of harmonic analysis to partial differential equations. One of the key tools in this theory is the following Hardy--Littlewood lemma.

\begin{theorem}
\label{theorem}
{\bf Hardy--Littlewood lemma}\\
Let $(\R^d, \M(\R^d), m^d)$ be the Lebesgue measure space. Then, for any $f \in L^1(\R^d)$,
\[
\forall\; \alpha>0, \quad m^d\left( \{ x\in \R^d: M(f)(x) >\alpha \} \right) 
\le \frac{3^d}{\alpha} \int_{\R^d}|f|\; dm^d.
\]
\end{theorem}
In view of Theorem \ref{theorem}, the maximal function is sometimes interpreted as encoding the worst possible behavior of the signal $f$. This point of view is further exploited through the {\it Calder\'on-Zygmund decomposition theorem}.

\begin{theorem}
\label{theorem2}
{\bf Calder\'on-Zygmund decomposition}\\
Let $(\R^d, \M(\R^d), m^d)$ be the Lebesgue measure space. Then, for any $f \in L^1(\R^d)$ and any $\alpha >0$, there exists $F, \Omega \subset \R^d$ such that $\R^d = F \cup \Omega$, $F \cap \Omega = \emptyset$, and
\begin{enumerate}
    \item $f(x) \le \alpha$, $m^d$-$a.e.$ in $F$
    \item $\Omega$ is the union of cubes $Q_k$, $k=1, \ldots$, whose interiors are pairwise disjoint, edges are parallel to the coordinate axes, and for each $k=1, \ldots$, we have
    \[
    \alpha < \frac{1}{m^d(Q_k)} \int_{Q_k} |f|\; dm^d \le 2^d \alpha.
    \]
\end{enumerate}
\end{theorem}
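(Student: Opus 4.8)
The plan is to construct the cubes $Q_k$ by a dyadic stopping-time selection (the classical Calder\'on-Zygmund argument) and to let $F$ be the complement of their union. First I would dispose of the trivial case $\int_{\R^d}|f|\,dm^d=0$, in which $f=0$ $m^d$-a.e.\ and one may take $\Omega=\emptyset$ and $F=\R^d$. Otherwise, since $f\in L^1(\R^d)$, I can fix an integer $N$ large enough that every dyadic cube $Q$ of side length $2^N$ satisfies $\frac{1}{m^d(Q)}\int_Q|f|\,dm^d\le 2^{-Nd}\int_{\R^d}|f|\,dm^d\le\alpha$. These cubes tile $\R^d$ and form the top layer of the construction.

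Starting from this layer, I would repeatedly bisect each coordinate, splitting every cube into its $2^d$ congruent dyadic children. A child $Q'$ is \emph{selected}, and no longer subdivided, exactly when $\frac{1}{m^d(Q')}\int_{Q'}|f|\,dm^d>\alpha$; otherwise it is subdivided again. Let $\{Q_k\}$ be the family of all selected cubes, and set $\Omega=\bigcup_k Q_k$ and $F=\R^d\setminus\Omega$. Dyadic cubes are always either nested or have disjoint interiors, and a selected cube is by construction never subdivided, so no selected cube can be strictly contained in another; hence the $Q_k$ have pairwise disjoint interiors and edges parallel to the axes, which is the structural content of claim (2).

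The two-sided average bound is where the dyadic scaling enters. Every top-layer cube has average $\le\alpha$ and is therefore never selected, so each selected cube $Q_k$ arises from at least one bisection and has a dyadic parent $Q$. Since $Q$ was not selected we have $\frac{1}{m^d(Q)}\int_Q|f|\,dm^d\le\alpha$, and combining $m^d(Q)=2^d\,m^d(Q_k)$ with $\int_{Q_k}|f|\,dm^d\le\int_Q|f|\,dm^d$ gives $\frac{1}{m^d(Q_k)}\int_{Q_k}|f|\,dm^d\le 2^d\alpha$. Together with the selection criterion this yields $\alpha<\frac{1}{m^d(Q_k)}\int_{Q_k}|f|\,dm^d\le 2^d\alpha$, as required.

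Finally, for claim (1) I would note that each $x\in F$ lies in an infinite nested sequence of non-selected cubes $Q^{(1)}\supsetneq Q^{(2)}\supsetneq\cdots$ whose diameters shrink to $0$ and each of which satisfies $\frac{1}{m^d(Q^{(j)})}\int_{Q^{(j)}}|f|\,dm^d\le\alpha$. The main obstacle is passing from this averaged bound to a pointwise one: I expect to invoke the Lebesgue differentiation theorem, which ensures that for $m^d$-a.e.\ $x$ these shrinking averages converge to $|f(x)|$, so that $|f(x)|\le\alpha$, and in particular $f(x)\le\alpha$, for $m^d$-a.e.\ $x\in F$. This differentiation step is the one place a nontrivial external result is needed; everything else is the bookkeeping of the stopping time and the factor $2^d$ produced by a single dyadic subdivision.
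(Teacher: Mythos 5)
Your proof is correct: this is the classical dyadic stopping-time construction, and each step is handled accurately --- the choice of a coarse top generation using $f \in L^1(\R^d)$ so that all initial averages are $\le \alpha$, the disjointness of the selected cubes from the dyadic nesting property, the two-sided bound $\alpha < \frac{1}{m^d(Q_k)}\int_{Q_k}|f|\,dm^d \le 2^d\alpha$ obtained from the selection criterion together with the non-selected parent and the volume factor $2^d$, and the passage to the pointwise bound on $F$ via the Lebesgue differentiation theorem (applied to the regular family of shrinking dyadic cubes containing $x$, which is the correct version to cite here). Note, however, that the paper itself offers no proof of this statement: Theorem 2 is quoted in the Preliminaries as classical background from the harmonic analysis literature (the references to Grafakos and Stein), so there is nothing in the paper to compare your argument against; what you have written is precisely the standard proof found in those references. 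One remark tying your proof to the paper's surrounding material: the Lebesgue differentiation theorem you invoke in the final step is itself typically deduced from the weak-type maximal inequality stated as the paper's Theorem 1 (the Hardy--Littlewood lemma), so your argument implicitly connects the two background results the paper presents side by side.
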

The Calder\'on-Zygmund decomposition allows us to split an arbitrary integrable function into its ``good" (i.e., small) and ``bad" (i.e., large) parts, which is a standard technique in addressing problems in the theory of Calder\'on-Zygmund operators. In this approach, the maximal function helps us control the undesirable behaviour of the signal by constraining the regions (represented by balls or cubes) on which this happens. It is this aspect of the maximal functions that we also aim to exploit in our construction of the {\it maxfun pooling}, which will be defined in the next section.

\section{Maxfun Pooling}  \label{sec:maxfun-def}

For maxfun pooling, we first fix an odd stride $s$ and parameter $b$ such that $2b+1\leq s$. Consider the same collection $\calQ$ with $Q_{k,\ell}$ defined in \eqref{eq:Q}, and let $q_{k,\ell}$ denote the center of $Q_{k,\ell}$, which is well defined since $s$ is assumed odd. For each $r\leq b$, we define the sub-squares,
\begin{equation}
\label{eq:suppB}
B_{k,\ell,r}
\subset Q_{k,\ell},
\end{equation}
where $B_{k,\ell,r}$ denotes the square of side length $2r+1$ whose center is also $q_{k,\ell}$. 

The {\it maxfun pooling} operator $M_{b,s}\colon \R^{M\times N}\to \R^{m\times n}$ is given by 
\begin{equation}
\label{eq:maxfun}
(M_{b,s} X)_{k,\ell}
:=\max_{1\leq r\leq b} \, \left( \frac{1}{(2r+1)^2} \sum_{(i,j)\in B_{k,\ell,r}} X_{i,j} \right). 
\end{equation}
The quantity $(M_{b,s} X)_{k,\ell}$ is the average of $X$ on $B_{k,\ell,r}$, and the optimal choice of $r$ depends on the restriction of $X$ to $Q_{k,\ell}$. 

\par 
Let us briefly justify the definition of maxfun pooling. Similar to the maximal function given in \eqref{e.8.1} where the supremum is taken over all balls $B$ centered at a fixed $x$, each $B_{k,\ell,r}$ is centered at $q_{k,\ell}$. Our convention regarding the nonnegative inputs of the pooling layer aligns with the use of the absolute value in the definition of the maximal function. Finally, the computational complexity of evaluating maxfun can be further reduced by choosing the $b$ parameter to be strictly smaller than $(s-1)/2$.

\par
For each region $Q_{k,\ell}$ and any image $X$, clearly we have the inequalities
\[
(\Pavg X)_{k,\ell}
\leq (M_{b,s}X)_{k,\ell}
\leq (\Pmax X)_{k,\ell}.
\]
From this point of view, the output of maxfun on each pooling region is quantitatively between that of average and max pooling. It is not difficult to construct examples of $X$ and $Y$ for which $\Pavg X =M_{b,s}X$ and $\Pmax Y= M_{b,s}Y$. 

Another main conceptual difference between these three pooling operators is the amount of information of $X$ that determines their outputs. Average pooling depends on the values of $X$ on the entire region $Q_{k,\ell}$, whereas max pooling selects the largest value of $X$ on $Q_{k,\ell}$, which is not necessarily representative of $X$. From this perspective, average pooling is more robust to perturbations. Maxfun pooling is adaptive in the sense that the optimal region $B_{k,\ell,r}$ depends on $X$, which is a different procedure for selecting a representative value.


\section{Convolutional Sparse Coding} \label{sec:sparse-coding}
The sparse coding problem is an important problem in signal processing, where one aims at finding a low dimensional representation using few atoms for high dimensional data \cite{papyan2016working}. Following standard conventions, we treat an image $X$ as a vector $X\in\R^N$ using the vectorization operator defined in \eqref{eq:vectorization}. Given a vector $X \in \mathbb{R}^N$, and a dictionary $D \in \mathbb{R}^{N\times M}$, the sparse coding problem attempts to find the sparsest vector $\Gamma \in \mathbb{R}^M$ such that $X= D\Gamma$. In other words, for a fixed dictionary $D\in \mathbb{R}^{N\times M}$, the sparse coding problem attempts to solve:
\begin{equation}
\min_{\Gamma}\|\Gamma\|_0 \quad s.t. \quad D\Gamma=X,
\label{sc1}
\end{equation}
where $\|\Gamma\|_0$ is the $l_0$ pseudo norm, which provides the number of non-zero elements in vector $\Gamma$.
Each column in $D$ represents one element in the dictionary, and finding the dictionary to represent data $X$, so that minimal number of dictionary elements are used, solves the sparse coding problem. 

Restriction on the sparsity of $\Gamma$ with respect to the mutual coherence of the dictionary $D$ can guarantee uniqueness of the solution to (\ref{sc1}). The mutual coherence of a matrix $D$, see \cite{donoho2006stable}, is defined as:
\begin{equation}
\mu(D)=\max_{i \neq j}\frac{|di^Td_j|}{\|d_i\|_2\cdot \|d_j\|_2},
\end{equation}
where $d_i$'s are the columns of matrix $D$.
However, finding the solution remains NP hard. Relaxation of the model to allow noise and form error bounds leads to the following formulation:
\begin{equation}
\min_{\Gamma} \|\Gamma\|_0 \quad s.t. \quad \|D\Gamma-X\|<\epsilon.
\end{equation}
When high dimensional signals are present, an alternative method called the convolutional sparse coding model (CSC) was proposed. One attempts to represent the whole signal $X \in \mathbb{R}^N$ as a multiplication of a global convolutional dictionary $D \in \mathbb{R}^{N\times Nm_1}$ and a sparse vector $\Gamma \in \mathbb{R}^{Nm_1}$. $D$ is constructed by shifting a local matrix of size $n_0 \times m_1$ in all possible positions, as shown in Figure \ref{Dconvolution}.
\begin{figure}
\centering
\includegraphics[width=\textwidth]{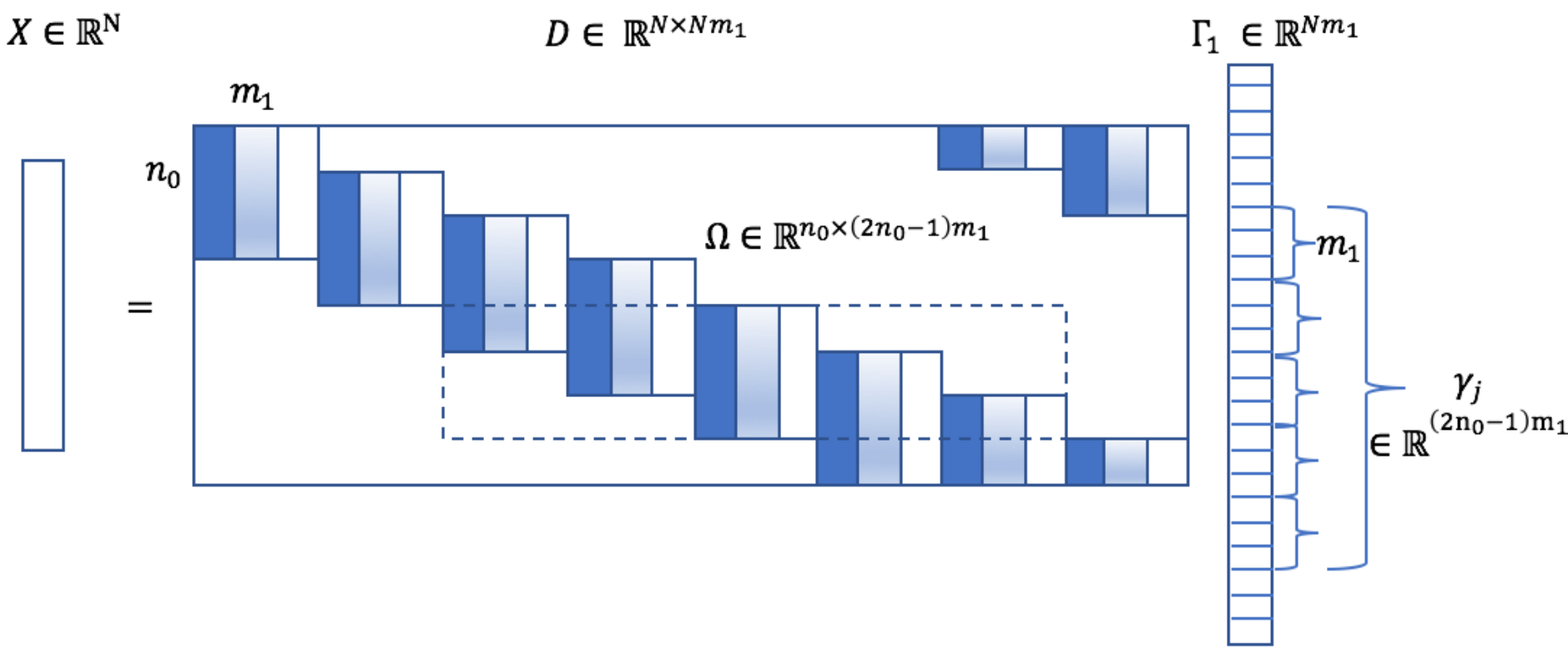}
\caption{Convolutional Sparse Coding, level 1 \cite{papyan2016convolutional}}
\label{Dconvolution}
\end{figure}
We define the $j$th stripe $\gamma_j$ of the sparse vector $\Gamma$ as a group of $2n_0-1$ adjacent sparse vectors of length $m_1$, starting at the $j$th vector of length $m_1$. See Figure \ref{Dconvolution} for an illustration.

The stripe $\gamma_j$ gives the representation of a patch of $X$, $x_j$ of length $n_0$ by $x_j=\Omega_j \gamma_j$. $\Omega_j \in \mathbb{R}^{n_0\times (2n_0-1)m_1}$ is a submatrix of $D$, called a stripe dictionary consisting of $n_0$ consecutive rows of $D$ and the columns of zeros removed. The $l_{0,\infty}$ norm of the global sparse vector $\Gamma_1$ is defined by the maximum number of non-zeros in any stripe of length $(2n_0-1)m_1$ extracted from it, i.e.,
\begin{equation}
\|\Gamma\|_{0,\infty}=\max_{i \in \{1, ..., N\}} \|\gamma_i\|_0.
\end{equation}
Here $\|\cdot\|_0$ is the zero norm that gives the number of nonzero elements of a vector. 

A multi-layer convolutional sparse coding model is defined so that the output sparse vector $\Gamma$ from the previous layer serves as the input vector in the next layer, and we aim at finding a new representation $\Gamma_2$ for a new set of dictionary $D_2$. Formally, the problem of finding solutions to multi-layer convolutional sparse coding problem is defined as the deep coding problem $DCP_{\lambda}$ in \cite{papyan2016convolutional}:
\begin{align}
\label{problem5}
\begin{aligned}
&Find \quad \{\Gamma_i\}_{i=1}^L \\
&X=D_1\Gamma_1, \\
&\Gamma_1 = D_2\Gamma_2, \\
&\vdots\\
&\Gamma_{L-1} = D_L\Gamma_L, \\
\end{aligned}
&&
\begin{aligned}
 &s.t.\\
 & \|\Gamma_1\|_{0,\infty} &\leq \lambda_1\\
 & \|\Gamma_2\|_{0,\infty} &\leq \lambda_2\\
 \\
 &\|\Gamma_L\|_{0,\infty} &\leq \lambda_L
\end{aligned}
\end{align}
where $\lambda_i$ are bounds on sparsity of the output vector $\Gamma_i$ at each level, and $L$ is the number of layers. Note that we want to find representations of the input vectors at each layer that are sparse in terms of its stripe sparsity, defined by $\|\cdot\|_{0,\infty}$.\par
In practice, the input signal $X$ can be contaminated with noise, and we have $Y=X+E$ as the input signal instead of $X$, where $E$ represents noise. In this case, we relax the constraint and allow the representation to vary within some error bounds of the input signal. The deep coding problem when noise is present ($DCP_{\lambda}^{\epsilon}$) \cite{papyan2016convolutional} is defined as:
\begin{align}
\label{DCSPe}
\begin{aligned}
&Find \quad \{\Gamma_i,P_i\}_{i=1}^L \\
&\|Y-D_1\Gamma_1\|_2\leq \epsilon_1, \\
&\|\Gamma_1-D_2\Gamma_2\|_2\leq \epsilon_2, \\
&\vdots\\
&\|\Gamma_{L-1}-D_L\Gamma_L\|_2\leq \epsilon_L, \\
\end{aligned}
&&
\begin{aligned}
&s.t.\\
 & \|\Gamma_1\|_{0,\infty} &\leq \lambda_1\\
 & \|\Gamma_2\|_{0,\infty} &\leq \lambda_2\\
 \\
 &\|\Gamma_L\|_{0,\infty} &\leq \lambda_L.
 \end{aligned}
\end{align}
Here $\epsilon_i$ is the error bound that are allowed in the $i$th layer. \par
Uniqueness of the solution to the $DCP_{\lambda}$ model, and the stability of the solution to the $DCP_{\lambda}^{\epsilon}$ problem have been shown in \cite{papyan2016convolutional}. The connection between deep convolutional sparse coding problem and feed forward neural network has also been demonstrated in \cite{papyan2016convolutional}, where it is proven that one can view the output vector $\Gamma_i$ from each layer of the $DCP_{\lambda}$ problem as the output from one layer of feed forward simplified convolutional neural network (CNN), and thus the deep convolutional sparse coding problem can be viewed as a signal reconstruction problem for simplified CNN models.

Pooling is a common operation included in CNNs that serves as feature extraction method to reduce redundancy of representation of signal and save computational resources. It has been shown that adding max pooling and average pooling in the feed forward path preserves the stability of the neural network \cite{kabkab2017spatial}. We demonstrate that the maxfun pooling, preserves the stability of a convolutional neural network when added in between layers of convolutions.

Given a input signal $X$, the deep convolutional sparse coding problem with pooling ($DCPP$) is defined by \cite{kabkab2017spatial}
\begin{align}
\label{DCPP}
\begin{aligned}
&Find \quad \{\Gamma_i,P_i\}_{i=1}^L \\
&X=D_1\Gamma_1, \\
&P_1 = D_2\Gamma_2, \\
&\vdots\\
&P_{L-1} = D_L\Gamma_L, \\
\end{aligned}
&& 
 \begin{aligned}
 &s.t.\\
& \|\Gamma_1\|_{0,\infty} &\leq \lambda_1\\
 & \|\Gamma_2\|_{0,\infty} &\leq \lambda_2\\
 \\
 &\|\Gamma_L\|_{0,\infty} &\leq \lambda_L,
 \end{aligned}
 &&
\begin{aligned}
\\
 &P_1=Pool_{b_1,s_1}(\Gamma_1),\\
 &P_2=Pool_{b_2,s_2}{}(\Gamma_2),\\ 
 &\vdots\\
 & P_L=Pool_{b_L,s_L}(\Gamma_L),
 \end{aligned}
\end{align}
where $Pool_{_ib,s}$ denotes the pooling operation at the step $i$. We take $Pool_{b_i,s}$ to be the maxfun pooling, i.e., $Pool_{b_i,s} =M_{b,s}$, as defined in (\ref{eq:maxfun}). 

Problem (\ref{DCPP}) intends to find a stable sparse representation $\Gamma_1$ of $X$ with dictionary elements in $D_1$, given restriction on the stripe-sparsity of $\Gamma_1$. Then pooling operation is performed on $\Gamma_1$ to get $P_1$. In second layer, we attempt to find the sparse representation $\Gamma_2$ of $P_1$ with dictionary elements in $D_2$. The stripe-sparsity of $\Gamma_2$ is restricted to be no greater than $\lambda_2$. We repeat the process $L$ times. 

If our input signal $X$ is contaminated by noise $E$, we are still interested in finding a sparse representation that is stable. Define the deep convolutional sparse coding problem with pooling when noise is present ($DCPP^{\epsilon}$) by \cite{kabkab2017spatial}
\begin{align}
\label{DCPPe}
\begin{aligned}
&Find \quad \{\Gamma_i,P_i\}_{i=1}^L \\
&\|Y-D_1\Gamma_1\| \leq \epsilon_1, \\
&\|P_1 - D_2\Gamma_2\| \leq \epsilon_2, \\
&\vdots\\
&\|P_{L-1} - D_L\Gamma_L\|\leq \epsilon_L, \\
\end{aligned}
&& 
 \begin{aligned}
 &s.t.\\
& \|\Gamma_1\|_{0,\infty} &\leq \lambda_1\\
 & \|\Gamma_2\|_{0,\infty} &\leq \lambda_2\\
 \\
 &\|\Gamma_L\|_{0,\infty} &\leq \lambda_L,
 \end{aligned}
 &&
\begin{aligned}
\\
 &P_1=Pool_{b_1,s_1}(\Gamma_1),\\
 &P_2=Pool_{b_2,s_2}{}(\Gamma_2),\\ 
 &\vdots\\
 & P_L=Pool_{b_L,s_L}(\Gamma_L),
 \end{aligned}
\end{align}
It has been shown in \cite{kabkab2017spatial} that when max pooling and average pooling are used, the stability of solution to the $DCPP^{\epsilon}$ problem is preserved. We show that when we use maxfun pooling, the stability result also holds. We prove the following theorem:
\begin{theorem}
\label{theorem1}
Suppose a vector $X$ satisfies the $DCPP$ model in (\ref{DCPP}), but is contaminated with noise $E$, where $\|E\|_2 \leq \epsilon$, resulting in $Y=X+E$. Suppose $\{\Gamma^*_i,P_i^*\}_{i=1}^L$ solves the problem in (\ref{DCPP}) and $\{\hat{\Gamma}_i,\hat{P}_i\}_{i=1}^L$ solves the problem in (\ref{DCPPe}). If 
\begin{equation}
\begin{split}
&\|\Gamma^*_i\|_{0,\infty} \leq \lambda_i <\frac{1}{2} \Big (1+\frac{1}{\mu(D_i)} \Big), \quad \forall 0 \leq i \leq L,\\
&\epsilon_0 = \epsilon, \quad \epsilon_i^2 =\frac{4\epsilon_{i-1}^2}{1-(2\|\Gamma_i^*\|_{0,\infty}-1)\mu(D_1)} \quad \forall i \geq 1,\\
&\text{then for all}\quad 1\leq i \leq L,\\
&\|P_i^*-\hat{P}_i\|^2_2 \leq \|\Gamma_i^*-\hat{\Gamma}_i\|_2^2 \leq \epsilon_i^2.
\end{split}
\label{assumption}
\end{equation}
Here maxfun pooling is used as the pooling operation and we assume that the minimum pooling region size $b \geq 1$.
\label{T1}
\end{theorem}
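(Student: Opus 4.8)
The plan is to prove the two inequalities in the conclusion by different means. The left inequality $\|P_i^*-\hat P_i\|_2^2 \le \|\Gamma_i^*-\hat\Gamma_i\|_2^2$ is precisely the assertion that maxfun pooling is non-expansive in the Euclidean norm, and I would isolate this as a standalone lemma, since it is the only genuinely new ingredient. The right inequality $\|\Gamma_i^*-\hat\Gamma_i\|_2^2 \le \epsilon_i^2$ is then an induction on the layer index $i$ that recycles the single-layer sparse-coding stability estimate of \cite{papyan2016convolutional} as used in \cite{kabkab2017spatial}, with the non-expansiveness feeding the error forward from one layer to the next.

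To prove non-expansiveness, fix inputs $U,V$, put $W=U-V$, and write $A_r$ for the linear operator sending an image to its array of averages over the subsquares $B_{k,\ell,r}$, so that $(M_{b,s}X)_{k,\ell}=\max_{1\le r\le b}(A_rX)_{k,\ell}$; the hypothesis $b\ge1$ guarantees this maximum is over a nonempty set. The elementary bound $\big|\max_r a_r-\max_r b_r\big|\le\max_r|a_r-b_r|$ gives, for each pooling cell,
\[
\big|(M_{b,s}U)_{k,\ell}-(M_{b,s}V)_{k,\ell}\big|\le\max_{1\le r\le b}\big|(A_rW)_{k,\ell}\big|.
\]
Letting $r^\star=r^\star(k,\ell)$ attain the maximum and applying Cauchy--Schwarz to the $(2r^\star+1)^2$ entries of $B_{k,\ell,r^\star}$ yields
\[
\big|(A_{r^\star}W)_{k,\ell}\big|^2\le\frac{1}{(2r^\star+1)^2}\sum_{(i,j)\in B_{k,\ell,r^\star}}W_{i,j}^2\le\sum_{(i,j)\in Q_{k,\ell}}W_{i,j}^2,
\]
where the last step uses $B_{k,\ell,r^\star}\subset Q_{k,\ell}$ together with $(2r^\star+1)^{-2}\le1$. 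Summing over $(k,\ell)$ and invoking the pairwise disjointness of the regions $Q_{k,\ell}$ collapses the right-hand side to $\|W\|_2^2$, establishing $\|M_{b,s}U-M_{b,s}V\|_2\le\|U-V\|_2$ and hence the left inequality at every layer.

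For the right inequality I would induct on $i$, setting $P_0^*:=X$ and $\hat P_0:=Y$ so that the base perturbation is $\|P_0^*-\hat P_0\|_2=\|E\|_2\le\epsilon=\epsilon_0$. At layer $i$ the clean representation is exact, $P_{i-1}^*=D_i\Gamma_i^*$, while the inductive hypothesis combined with the non-expansiveness just proved gives $\|P_{i-1}^*-\hat P_{i-1}\|_2\le\|\Gamma_{i-1}^*-\hat\Gamma_{i-1}\|_2\le\epsilon_{i-1}$, so the effective noise entering layer $i$ is $\epsilon_{i-1}$. Since both $\Gamma_i^*$ and $\hat\Gamma_i$ obey the stripe-sparsity bound $\lambda_i<\frac{1}{2}(1+1/\mu(D_i))$, their difference has stripe support at most $2\lambda_i$; the triangle inequality controls $\|D_i(\Gamma_i^*-\hat\Gamma_i)\|_2$ by $2\epsilon_{i-1}$, and the mutual-coherence lower bound $\|D_i z\|_2^2\ge\big(1-(2\lambda_i-1)\mu(D_i)\big)\|z\|_2^2$ from \cite{papyan2016convolutional} converts this into $\|\Gamma_i^*-\hat\Gamma_i\|_2^2\le 4\epsilon_{i-1}^2/\big(1-(2\lambda_i-1)\mu(D_i)\big)=\epsilon_i^2$, closing the induction.

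The main obstacle is the non-expansiveness lemma: unlike max pooling and average pooling, maxfun is a pointwise maximum over a parametrized family of averaging operators, so one must verify that this maximum does not inflate the Euclidean norm, which is exactly what the max-Lipschitz inequality plus the disjointness of the $Q_{k,\ell}$ secure. The remaining subtlety I would check carefully in the induction is that the reconstruction tolerances are matched so that the effective noise at layer $i$ is $\epsilon_{i-1}$ (the previous pooling discrepancy) rather than $\epsilon_i$, and that $\Gamma_i^*$ stays feasible for the noisy layer-$i$ problem; granting the coherence lower bound of \cite{papyan2016convolutional} for the difference vector, the rest is routine bookkeeping.
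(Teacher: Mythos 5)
Your proposal follows essentially the same route as the paper: it isolates the non-expansiveness of maxfun pooling as a standalone lemma (the paper's Lemma \ref{lemma4.2}, which proves the same estimate by splitting pooling cells into the sets $K_1$/$K_2$ according to which maximum dominates --- a hand-rolled version of your max-Lipschitz inequality --- followed by the same Cauchy--Schwarz and disjointness-of-regions steps), and then runs the same layer-by-layer induction, where you unpack the triangle-inequality-plus-coherence argument that the paper invokes as a black box via Theorem 3 of \cite{papyan2016working}. The tolerance-matching subtlety you flag (the effective noise entering layer $i$ must be $\epsilon_{i-1}$, whereas the constraint in (\ref{DCPPe}) is stated with $\epsilon_i$) is real, but it is present, unremarked, in the paper's own proof as well, so your treatment is if anything the more careful one.
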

In order to prove Theorem \ref{T1}, we first prove the following Lemma for maxfun pooling.
\begin{lemma}
Let $X$ and $\hat{X}$ be two functions in $\mathbb{R}^{N\times N}$, and let $P = M_{b,s}(X)$, $\hat{P}=M_{b,s}(\hat{X})$ be the outcome of maxfun pooling of $X$ and $\hat{X}$,respectively, and assume that $s \geq b$. Then $\|P-\hat{P}\|_2 \leq \|X-\hat{X}\|_2$, where $\|\cdot\|_2$ denotes the Frobenius norm.
\label{lemma4.2}
\end{lemma}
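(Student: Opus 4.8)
The plan is to reduce the global Frobenius estimate to a purely local one on a single pooling region, and then to exploit two facts: that the difference of two maxima is controlled by the maximum of the differences, and that each averaging operator is a contraction. First I would decompose the Frobenius norm across pooling regions. Since the squares $Q_{k,\ell}$ from \eqref{eq:Q} are pairwise disjoint and every sub-square satisfies $B_{k,\ell,r}\subset Q_{k,\ell}$ for $r\le b$ by \eqref{eq:suppB}, and since $P_{k,\ell}:=(M_{b,s}X)_{k,\ell}$ depends only on the entries of $X$ inside $Q_{k,\ell}$, it suffices to prove the local estimate
\[
|P_{k,\ell}-\hat P_{k,\ell}|^2 \;\le\; \sum_{(i,j)\in Q_{k,\ell}} |X_{i,j}-\hat X_{i,j}|^2
\]
for each fixed $(k,\ell)$. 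Summing over $(k,\ell)$ and using disjointness of the $Q_{k,\ell}$ then immediately gives $\|P-\hat P\|_2^2\le\|X-\hat X\|_2^2$.

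For the local estimate I would write $a_r:=\frac{1}{(2r+1)^2}\sum_{(i,j)\in B_{k,\ell,r}} X_{i,j}$ and let $\hat a_r$ denote the analogous average of $\hat X$, so that $P_{k,\ell}=\max_{1\le r\le b} a_r$ and $\hat P_{k,\ell}=\max_{1\le r\le b}\hat a_r$. The crucial step is the elementary inequality $|\max_r a_r-\max_r \hat a_r|\le \max_r|a_r-\hat a_r|$: if the first maximum is attained at index $r^\ast$, then $\max_r a_r-\max_r\hat a_r\le a_{r^\ast}-\hat a_{r^\ast}\le\max_r|a_r-\hat a_r|$, and the reverse bound follows by symmetry. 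This collapses the two possibly distinct optimal radii into a single comparison and reduces matters to controlling $|a_r-\hat a_r|$ for one fixed $r$.

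Finally I would invoke the contractivity of averaging. For each fixed $r$, Jensen's inequality applied to the convex map $t\mapsto t^2$ over the $(2r+1)^2$ entries of $B_{k,\ell,r}$ yields
\[
|a_r-\hat a_r|^2 = \Big(\tfrac{1}{(2r+1)^2}\sum_{(i,j)\in B_{k,\ell,r}}(X_{i,j}-\hat X_{i,j})\Big)^2 \le \tfrac{1}{(2r+1)^2}\sum_{(i,j)\in B_{k,\ell,r}}|X_{i,j}-\hat X_{i,j}|^2.
\]
Since $(2r+1)^2\ge 1$ and $B_{k,\ell,r}\subset Q_{k,\ell}$, the right-hand side is bounded by $\sum_{(i,j)\in Q_{k,\ell}}|X_{i,j}-\hat X_{i,j}|^2$; taking the maximum over $r$ and combining with the previous paragraph gives the local estimate. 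I expect the only genuinely delicate point to be the max-of-averages step, namely ensuring that the differing maximizing radii of $X$ and $\hat X$ do not spoil the contraction, and this is exactly what the symmetric argument above resolves. I would also note that because $r\ge 1$ forces $(2r+1)^2\ge 9$, the same computation in fact produces the sharper bound $\|P-\hat P\|_2\le\tfrac13\|X-\hat X\|_2$, so the stated inequality holds with substantial room to spare.
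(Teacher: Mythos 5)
Your proof is correct and follows essentially the same route as the paper's: controlling the difference of maxima by the difference of averages at a common radius (your symmetric max inequality is exactly the paper's case split into the index sets $K_1$ and $K_2$), applying Cauchy--Schwarz/Jensen to the averages, and summing over the disjoint pooling regions. Your closing observation that $r \geq 1$ forces $(2r+1)^2 \geq 9$ and hence yields the sharper constant $\|P-\hat{P}\|_2 \leq \tfrac{1}{3}\|X-\hat{X}\|_2$ is also correct, and is a small improvement the paper does not record.
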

\begin{proof}
Let $B_{k,\ell,r} $ be the set of indices that represents points in the centered sub-square of side length $2r + 1$ in the $(k,\ell)$-th pooling region. Let 
\[
\gamma_{k,\ell,r}=\frac{1}{(2r + 1)^2}\sum_{(i,j) \in B_{k,\ell,r}}X_{i,j}
\quad\text{and}\quad 
\hat{\gamma}_{k,\ell,r} =\frac{1}{(2r + 1)^2}\sum_{(i,j) \in B_{k,\ell,r}}\hat{X}_{i,j}.
\]
We take $r_{k,\ell}^*=\underset{1\leq r\leq b}{\text{argmax}}\gamma_{k,\ell,r}$ to indicate the $r$ parameter of the sub-square with the maximum of $\gamma_{k,\ell,r}$ over all $r$ for each $(k,\ell)$-th region; and we take $\hat{r}_{k,\ell}^*=\underset{1\leq r\leq b}{\text{argmax}}\hat{\gamma}_{k,\ell,r}$ to indicate the $r$ parameter of the sub-square with maximum of $\hat{\gamma}_{k,\ell,r}$ over all $r$ for each $(k,\ell)$-th region. 
We also take the $r_{min}^*$ to be minimum of all $r_{k,\ell}^*$, and $\hat{r}_{min}^*$ to be minimum of all $\hat{r}_{k,\ell}^*$. Furthermore, we let $K_1$ be the set of indices of $(k,\ell)$ so that $\gamma_{k,\ell,r_{k,\ell}^*} \geq \hat{\gamma}_{k,\ell,\hat{r}_{k,\ell}^*}$. Let $K_2$ be the set of indices of $(k,\ell)$ so that $\gamma_{k,\ell,r_{k,\ell}^*} < \hat{\gamma}_{k,\ell,\hat{r}_{k,\ell}^*}$. Then we have
\begin{align}
&\|P-\hat{P}\|^2_2 \nonumber \\
 &=\sum_{(k,l)} \left(\underset{1\leq r \leq b}{\max} \Big(\frac{1}{(2r + 1)^2}\sum_{(i,j) \in B_{k,\ell,r}}X_{i,j} \Big)- \underset{1\leq r \leq b}{\max} \Big(\frac{1}{(2r + 1)^2}\sum_{(i,j) \in B_{k,\ell,r}}\hat{X}_{i,j}\Big)\right)^2\\
&= \sum_{(k,\ell)\in K_1} (\gamma_{k,\ell,r_{k,\ell}^*}-\hat{\gamma}_{k,\ell,\hat{r}_{k,\ell}^*})^2+\sum_{(k,\ell)\in K_2} (\hat{\gamma}_{k,\ell,\hat{r}_{k,\ell}^*}-\gamma_{k,\ell,r_{k,\ell}^*})^2\\
\label{equation7}
&\leq \sum_{(k,\ell)\in K_1} (\gamma_{k,\ell,r_{k,\ell}^*}-\hat{\gamma}_{k,\ell,r_{k,\ell}^*})^2 +\sum_{(k,\ell)\in K_2} (\hat{\gamma}_{k,\ell,\hat{r}_{k,\ell}^*}-\gamma_{k,l,\hat{r}_{k,\ell}^*})^2\\
\label{equation8}
&=\sum_{(k,\ell) \in K_1}\left(\frac{1}{(2r_{k,\ell}^*+1)^2}\sum_{(i,j) \in B_{k,l,r_{k,\ell}^*}}X_{i,j}-\frac{1}{(2r_{k,\ell}^*+1)^2}\sum_{(i,j) \in B_{k,\ell,r_{k,\ell}^*}}\hat{X}_{i,j}\right)^2 \nonumber \\
&\quad  + \sum_{(k,\ell) \in K_2}\left(\frac{1}{(2\hat{r}_{k,\ell}^*+1)^2}\sum_{(i,j) \in B_{k,\ell,\hat{r}_{k,\ell}^*}}\hat{X}_{i,j}-\frac{1}{(2\hat{r}_{k,\ell}^*+1)^2}\sum_{(i,j) \in B_{k,\ell,\hat{r}_{k,\ell}^*}}X_{i,j}\right)^2\\
\label{equation10}
&\leq \sum_{(k,\ell) \in K_1}\frac{1}{(2r_{k,\ell}^*+1)^2}\sum_{(i,j) \in B_{k,\ell,r_{k,\ell}^*}}(X_{i,j}-\hat{X}_{i,j})^2 \nonumber \\
& \quad +\sum_{(k,\ell) \in K_2}\frac{1}{(2\hat{r}_{k,\ell}^*+1)^2}\sum_{(i,j) \in B_{k,\ell,\hat{r}_{k,\ell}^*}}(\hat{X}_{i,j}-X_{i,j})^2\\
\label{equation11}
&\leq \frac{1}{(2r_{min}^*+1)^2}\sum_{(k,\ell) \in K_1,(i,j) \in B_{k,\ell,r_{k,\ell}^*}}(X_{i,j}-\hat{X}_{i,j})^2  \nonumber \\
&\quad +
\frac{1}{(2\hat{r}_{min}^*+1)^2}\sum_{ (k,\ell)\in K_2, (i,j)\in B_{k,\ell,\hat{r}_{k,\ell}^*}}(\hat{X}_{i,j}-X_{i,j})^2\\
\label{equation12}
&\leq \sum_{(k,\ell) \in K_1, (i,j) \in B_{k,\ell,b} }(X_{i,j}-\hat{X}_{i,j})^2+\sum_{(k,\ell) \in K_2, (i,j) \in B_{k,\ell,b} }(\hat{X}_{i,j}-X_{i,j})^2\\
\label{equation13}
&\leq \|X-\hat{X}\|_2^2.
\end{align}
The inequality (\ref{equation7}) comes from the fact that $\hat{\gamma}_{k,l,\hat{r}_{k,\ell}^*}$ is the maximum over all $r$ and thus $\hat{\gamma}_{k,\ell,\hat{r}_{k,\ell}^*}\geq \hat{\gamma}_{k,\ell,r_{k,\ell}^*}$, and similarly $\gamma_{k,\ell,r_{k,\ell}^*}\geq\gamma_{k,\ell,\hat{r}_{k,\ell}^*}$. In (\ref{equation8}), $B_{k,\ell, r_k^*}$ and $B_{k,\ell, \hat{r}_k^*}$ are the corresponding set of indices for which $\gamma_{k,\ell,r_{k,\ell}^*}$ and $\hat{\gamma}_{k,\ell,\hat{r}_{k,\ell}^*}$ are maximums across all $r$'s, respectively. The inequality (\ref{equation10}) holds based on the inequality $(\sum_{i=1}^n a_i)^2 \leq n\sum a_i^2$.
 Inequality (\ref{equation12}) follows from the fact that $r\leq b$ and stride size $s \geq 2b + 1$. $B_{k,\ell,b}$ represents the indices of the sub-square of length $2b+1$ at the initial position in the $(k,l)$th pooling region.
\end{proof}
We now prove Theorem \ref{T1}. 
\begin{proof}
By Theorem 3 in \cite{papyan2016working}, we know that for a signal $Y=X+E$, if
\begin{equation*}
\begin{split}
&1.\|\Gamma_1^*\|_{0,\infty} < \frac{1}{2}(1+\frac{1}{\mu(D_1)}) \text{ and } \|E\|_2=\|Y-D_1\Gamma^*_1\|_2 \leq \epsilon_0,\\
&2.\|\hat{\Gamma}_1\|_{0,\infty} < \frac{1}{2}(1+\frac{1}{\mu(D_1)}) \text{ and } \|Y-D_1\hat{\Gamma}_1\|_2 \leq \epsilon_0,
\end{split}
\end{equation*}
then
\begin{equation}
\|\Delta\|_1^2=\|\Gamma^*_1-\hat{\Gamma}_1\|_2^2 \leq \frac{4\epsilon_0^2}{1-(2\|\Gamma_1\|_{0,\infty}-1)\mu (D_1)}=\epsilon_1^2.
\end{equation}
Since $\|\Gamma_1^*\|_{0,\infty} \leq \lambda_1$ and $\|\hat{\Gamma}_1^*\|_{0,\infty} \leq \lambda_1$ by assumption in problem \ref{DCPP} and \ref{DCPPe}, and $\lambda$ is bounded by assumption \ref{assumption} in theorem \ref{T1}, the first parts of 1 and 2 hold. Since $\Gamma_1^*$ is the solution to the $DCPP$ problem, it must be true that $\|Y-D_1\Gamma_1^*\| \leq \epsilon_0$. $\|Y-D_1\hat{\Gamma}_1\| \leq \epsilon_0$ by assumption in problem \ref{DCPPe}. Therefore, we have $\|\Delta\|_2^2 \leq \epsilon_1^2$. And hence by Lemma \ref{lemma4.2}, we have
\begin{equation}
\|P^*_1-\hat{P}_1\|_2^2 \leq \|\Delta_1\|_2^2 \leq \epsilon_1^2.
\end{equation}
At second level, the same argument holds so that $\|\Gamma_2^*\| <\lambda_2 <\frac{1}{2}(1+\frac{1}{\mu(D_1)})$ and $\|\hat{\Gamma}_2\|<\lambda_2 <\frac{1}{2}(1+\frac{1}{\mu(D_1)})$. $\|P^*_1-D_2\Gamma_2^*\|_2<\epsilon_2$ by assumption in problem \ref{DCPP}. $\|\hat{P}_1-D_2\hat{\Gamma}_2\|_2<\epsilon_2$ by assumption in problem \ref{DCPPe}. Therefore, by theorem 3 in \cite{papyan2016working} we have
\begin{equation}
\|\Gamma_2^*-\hat{\Gamma}_2\|_2^2 \leq \epsilon_2^2,
\end{equation}
and by Lemma 1, we get
\begin{equation}
\|P_2^*-\hat{P}_2\|_2^2 \leq \|\Gamma_2^*-\hat{\Gamma}_2\|_2^2 \leq \epsilon_2^2.
\end{equation}
Following this argument for $1\leq i \leq L$, we complete the proof and showed that
\begin{equation}
\|P_i^*-\hat{P}_i\|_2^2 \leq \|\Gamma_i^*-\hat{\Gamma}_i\|_2^2 \leq \epsilon_i^2, \quad \forall \quad 1 \leq i \leq L.
\end{equation}
\end{proof}

\section{Classification Experiments} \label{sec:classification}


As described in Section~\ref{sec:introduction}, pooling as a technique for dimension reduction arises in a variety of signal processing contexts.
Accordingly, the sparse coding problem described in Section~\ref{sec:sparse-coding} is not the only possible application for maxfun pooling.
In this section we consider maxfun for supervised classification problems.

Prior work has examined properties of pooling operators in the context of supervised classification.
For example,~\cite{boureau2010theoretical} performed theoretical analyses and conducted empirical experiments comparing maximum and average pooling.  Their findings indicated that, depending upon the data and features at hand, either maximum or average pooling may be preferable.  
The authors also identified potential benefits of pooling methodologies that ``interpolate" between maximum and average pooling.  As demonstrated in Section~\ref{sec:maxfun-def}, maxfun pooling also has the property that it resides between maximum and average pooling in a precise sense.  
Thus it is of interest to explore how maxfun relates not only to average and maximum pooling, but also to other intermediate pooling techniques.

\subsection{Approach}

Our goal is to highlight differences between different pooling strategies; therefore we construct a classification problem where pooling plays a significant role in developing the feature representation.
Instead of the gradual dimension reduction one might observe throughout a standard feedforward deep neural network (where 2x2 or 3x3 pooling regions might be used at any given layer) here we consider an approach akin to very shallow networks followed by a single, aggressive pooling step.
The preliminary experiments presented in this section are in the same spirit as those of~\cite{boureau2010theoretical}, albeit with a different implementation.

Since natural images often manifest meaningful correlations in the spatial dimension we choose to focus upon image classification problems.
We generate raw features by extracting the outputs from the first layer of a standard CNN (prior to pooling) that has been pre-trained on natural images.  
We then apply various pooling techniques along the spatial dimensions of these feature maps.
Finally, the vectorized feature maps are used in conjunction with a traditional multi-class support vector machine (SVM)~\cite{hearst1998support} to compare the utility of the different pooling methods.

For our experiments we use a subset of the Caltech-101 data set~\cite{fei2007learning}. To mitigate the impact of class imbalance, we limit our study to classes that have between 80 and 130 instances.  The result is an 18-class classification problem with mild class imbalance.
To standardize the spatial dimensions all images are first zero padded to make them square; e.g. a 100x120 pixel image is padded to produce a 120x120 pixel image. 
Images are kept centered when padding (in the previous example 10 rows would be added to the top and 10 to the bottom of the image).
Finally, we resize all images to 128x128 pixels.
This provides a collection of natural images with the same dimensions where some attempt has been made to preserve the aspect ratio.

For raw features we use the outputs from the first layer of the Inception-v3 network~\cite{szegedy2016rethinking}.  This deviates from~\cite{boureau2010theoretical}, which used SIFT to construct feature maps (note their study predated a number of key developments in convolutional neural networks).
We then spatially decompose the resulting feature tensor into multiple (possibly overlapping) pooling regions.
This decomposition is applied independently along each channel, and thus preserves the cardinality of the channel dimension.
After identifying the pooling regions, we apply various pooling operators. 
These pooled tensors are then vectorized and constitute the feature maps used to solve the multiclass classification problem.

\subsubsection{Baselines}
In addition to maximum and average pooling, we also compare maxfun to  other pooling strategies that sit in between these two extremes.
In particular, we consider the ``stochastic pooling" method of~\cite{zeiler2013stochastic} and the ``mixed pooling" method of~\cite{lee2016generalizing}. For stochastic pooling, the authors recommend a particular weighted average motivated by their original stochastic formulation: 
\begin{equation} \label{eq:stochastic-pooling}
	v_{k,\ell} = c_{k,\ell} \sum_{(i,j) \in Q_{k,\ell}} X_{i,j}^2, \qquad c_{k,\ell} = \frac{1}{\sum_{(i,j) \in Q_{k,\ell}} X_{i,j}} ,
\end{equation}
where $v_{k,\ell}$ denotes the pooled value and $X$ is a (non-negative) two-dimensional feature map, following the notation of Section~\ref{sec:prelim}.
Note $X$ could represent a raw image or some higher-level representation thereof.
The non-negativity assumption is consistent with image pixel intensities or with features extracted from the output of a RELU layer within a CNN.  See~\cite{zeiler2013stochastic} for full details on the stochastic origins of~\eqref{eq:stochastic-pooling}.

Mixed pooling is defined as a convex combination of maximum and average pooling; that is
\begin{equation} \label{eq:mixedpooling}
	v_{k,\ell} = \alpha \, (P_{\text{max}}X)_{k,\ell} + (1-\alpha)\, (P_{\text{avg}} X)_{k,\ell} ,
\end{equation}
where $\alpha \in (0,1)$.

The maxfun and the mixed pooling strategies both introduce hyperparameters to our experiments; in the case of the maximal function we elected to add a minimum radius $r_{min} > 1$ as a lower bound on $r$ in \eqref{eq:maxfun} while for the mixed pooling strategy we must select the scalar $\alpha$ in~\eqref{eq:mixedpooling}.
In both cases we use a $k$-fold cross-validation procedure with $k=3$ to select these hyperparameters.
Our training and testing data sets are of size 975 and 649 with the partition chosen uniformly at random.

\subsection{Preliminary Results and Discussion}
Results for our pooling experiments are summarized in table \ref{tbl:empirical}.  
Across all pooling methods we consider two scenarios: one where the pooling regions partition the spatial domain and another where the pooling windows overlap.  
For this study we implemented both centered and non-centered versions of the maxfun pooling.
The centered variant is a realization of \eqref{eq:maxfun} while the non-centered variant allows the squares to be centered at points other than the center of pooling region.

\begin{table}
	\centering
	\begin{tabular}{l|cc}
		pooling  & \multicolumn{2}{c}{SVM accuracy} \\ 
		strategy & $(b = 21, s=21)$ & $(b=21, s=11)$\\ \hline\hline
		average & 0.5763 & 0.6102\\
		maximum & 0.5932 & 0.5932 \\
		mixed & 0.6287 & 0.6240\\
		stochastic & 0.6502 & 0.6641\\
		maxfun & 0.6225 & 0.6102\\
		centered maxfun  & 0.6626 & 0.6579
	\end{tabular}
	\caption{Empirical results for our Caltech-101 classification experiments. Pooling regions have dimensions $b \times b$ and the pooling stride is $s$. Thus, when $s=b$ (column 1), the pooling regions partition the spatial domain.  When when $s<b$ (column 2) the pooling regions overlap.}
	\label{tbl:empirical}
\end{table}

Table~\ref{tbl:empirical} suggests that the centered maxfun pooling strategy provides relatively good results, on par with those of the stochastic pooling method.
Note that these results are not state-of-the-art for this problem.
The simplified network topology and aggressive pooling we utilize here are sub-optimal from a pure performance standpoint.
Recall our goal was to highlight relative differences among pooling strategies; at this stage of this work we are yet not focused on absolute performance.

It is interesting to observe the the more flexible non-centered maxfun does not appear as effective as the centered variant.  
We might speculate that edge effects are causing some issues with the non-centered variant.  
Alternatively, it is possible that the location of the pooling window may be a more significant driver in feature importance than the spatial extent of the pooling subregion.

Obviously the experiments we describe here are highly preliminary.  One direction for future work is to expand the scope these studies and to further inquire into differences between the centered vs non-centered maxfun variants.
Furthermore, there could be value in a more detailed comparison, both theoretically and empirically, of maxfun with other pooling variants that ``interpolate'' between average and maximum pooling.
Ultimately, there is an open question as to whether maxfun can benefit modern CNN architectures if it were to be included as a pooling layer.
This would entail defining a suitable notion of a gradient for discrete maxfun pooling and conducting appropriate numerical studies.
These studies would also need to address the added computational expense of maxfun over simpler methods, such as stochastic pooling~\eqref{eq:stochastic-pooling}.

\section{Conclusions}
In this paper we introduced a novel pooling method, maxfun pooling, inspired by the Hardy--Littlewood maximal function. We proved that this new pooling strategy maintains the stability of a convolutional neural network and we demonstrated its experimental performance by comparing it with state-of-the-art pooling algorithms in classification tasks.\par
Many functions and transformations originating from harmonic analysis have demonstrated the ability to extract useful features from the input data sets for classification or segmentation purposes. Maxfun pooling is another example of such a strategy that effectively extracts features from outputs of layers of neural networks and produces faithful representation of input data. 
\bibliographystyle{cjj}
\bibliography{bibitems} 
\end{document}